\documentclass[letterpaper]{article} 
\usepackage[preprint]{aaai2027}
\usepackage[hyphens]{url}  
\usepackage{graphicx} 
\usepackage{natbib}  
\usepackage{caption} 
\usepackage{algorithm}
\usepackage{algorithmic}

\usepackage{newfloat}
\usepackage{listings}
\DeclareCaptionStyle{ruled}{labelfont=normalfont,labelsep=colon,strut=off} 
\floatstyle{ruled}
\newfloat{listing}{tb}{lst}{}
\floatname{listing}{Listing}

\usepackage{enumitem}
\usepackage[utf8]{inputenc} 
\usepackage[T1]{fontenc}    
\usepackage{url}            
\usepackage{booktabs}       
\usepackage{nicefrac}       
\usepackage{microtype}      
\usepackage{xcolor}         
\usepackage{times}
\usepackage{epsfig}
\usepackage{graphicx}

\usepackage{amssymb}
\usepackage{tabularx}
\usepackage{microtype}
\usepackage{graphicx}
\usepackage{subcaption}
\usepackage{booktabs} 

\usepackage{makecell}
\usepackage{ifthen}
\usepackage{xspace}
\usepackage{upgreek}
\usepackage{pifont}
\usepackage{colortbl}
\usepackage{multirow}

\newcommand{\ours}{\texttt{scBatchProx}\xspace}
\usepackage{amsmath, amssymb, amsthm, mathtools}

\theoremstyle{plain}
\newtheorem{theorem}{Theorem}[section]
\newtheorem{proposition}[theorem]{Proposition}
\newtheorem{lemma}[theorem]{Lemma}

\theoremstyle{definition}
\newtheorem{definition}[theorem]{Definition}

\theoremstyle{remark}

\title{scBatchProx: Federated-Inspired Refinement for Stable Cell-Type Discriminability under Heterogeneous Batch Compositions}
\author{
    Quang-Huy Nguyen, Jiaqi Wang\corresponding, Wei-Shinn Ku\corresponding
}
\affiliations{
    Department of Computer Science and Software Engineering\\
    Auburn University\\
    Auburn, AL 36849 USA\\
    \{hqn0001, jqwang, wzk0004\}@auburn.edu
}

\begin{document}

\maketitle

\begin{abstract}
Single-cell integration workflows often construct low-dimensional cell embeddings and then refine them with post-hoc methods to reduce batch effects. This refinement process can become unstable when cell-type compositions vary across batches, with some populations underrepresented or absent in particular batches. The problem becomes more consequential in dynamic single-cell data systems, where newly acquired batches can change both technical conditions and cell-type composition. Such instability can reduce downstream cell-type classification performance and weaken stability under imbalance perturbations.
We introduce \ours, a lightweight post-hoc refinement method for stabilizing single-cell latent embeddings in these heterogeneous and evolving settings. \ours operates directly on precomputed embeddings and treats each batch or study as a client in a federated-inspired optimization procedure. A batch-conditioned FiLM adapter learns local latent updates, while proximal and identity-preserving regularization keep these updates conservative.
Experiments on multi-batch and cross-study single-cell datasets show that \ours improves downstream cell-type classification across different upstream embeddings. In controlled imbalance perturbations, \ours maintains more stable affected-cell-type F1 when selected populations are downsampled or ablated from one batch. In cumulative retraining and continual integration settings, \ours remains effective as new datasets arrive over time. Together, these results suggest that conservative, federated-inspired refinement can help maintain stable single-cell embeddings as batch compositions change across datasets and over time.
\end{abstract}


\section{Introduction}

Single-cell RNA sequencing now profiles thousands to millions of cells in a single experiment~\cite{klein2015droplet,macosko2015highly,satpathy2019massively,datlinger2021ultra}. As these datasets accumulate across studies, platforms, and laboratories, single-cell analysis increasingly depends on integrating cells into shared low-dimensional representations. This integration is difficult because technical variation can separate cells by batch rather than by biological state. If left uncorrected, these batch effects can dominate learned embeddings and distort downstream analyses.


Heterogeneous cell-type composition complicates the interpretation of integrated embeddings. Newly generated datasets rarely contain the same populations in the same proportions. Some cell types may be enriched in one batch, depleted in another, or absent altogether. We refer to this variation as heterogeneous cell-type support. Uneven cell-type support can distort the representation of underrepresented populations and reduce their separability in the integrated embedding. These representation errors can consequently affect downstream cell-type classification~\cite{maan2024characterizing}.


Post-hoc refinement can improve existing embeddings without retraining the upstream embedding construction method. However, strong batch mixing and downstream discriminability are not the same objective. A refinement procedure can improve batch alignment while introducing latent changes that are unnecessary for preserving cell-type discriminability. The problem becomes more difficult as datasets evolve. Newly arriving batches can change both technical variation and cell-type composition, so repeated global correction may move previously stable embeddings. These settings call for a refinement strategy that adapts locally to each batch while limiting unnecessary movement in the latent space.


Federated learning (FL) provides a useful perspective for this problem. FL systems are designed to optimize across heterogeneous clients, each with its own local data distribution, while maintaining a shared global model. This structure closely mirrors single-cell integration under heterogeneous batch compositions, where each batch exhibits its own cell-type distribution and technical characteristics. It also naturally supports dataset evolution, since newly arriving clients can participate in optimization without revisiting historical data. We build on this correspondence in \ours.

\begin{figure*}[t]
\centering
\includegraphics[width=0.8\linewidth]{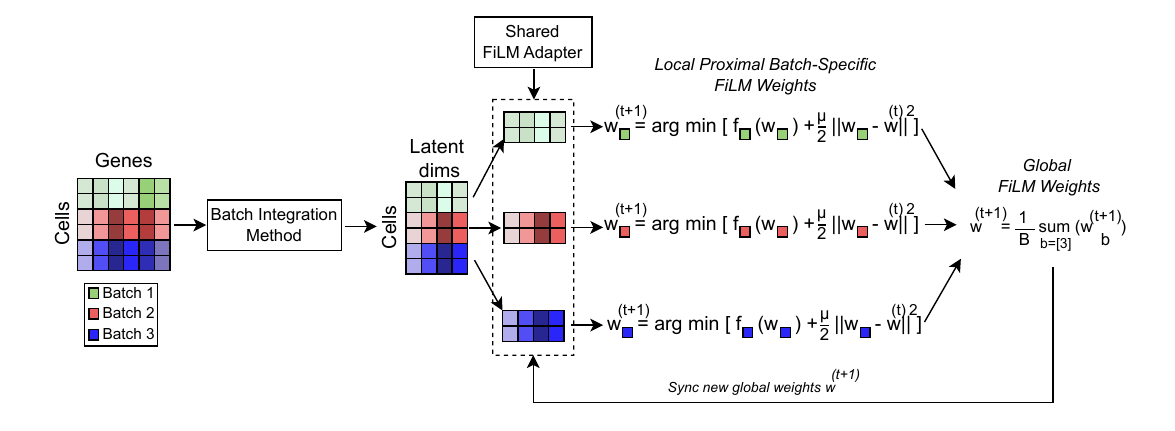}
\caption{Framework of \ours.}
\label{fig:figure 1}
\end{figure*}


\textbf{Model Overview.}
\ours is a lightweight FL-inspired post-hoc framework for refining single-cell latent embeddings while preserving cell-type discriminability under heterogeneous and evolving batch compositions. As shown in Figure~\ref{fig:figure 1}, \ours takes as input a global cell-level embedding matrix produced by an arbitrary embedding construction method, with cells partitioned by batch from the same tissue. Each batch is treated as a local client, as in FL systems. A shared Feature-wise Linear Modulation (FiLM)~\cite{perez2018film} adapter applies batch-conditioned affine updates to the latent embedding. 
We choose FiLM for two reasons that are particularly relevant in evolving single-cell analysis pipelines. First, its scale-and-shift updates provide a conservative form of adaptation, helping preserve the existing embedding geometry as new batches are integrated over time. Second, FiLM introduces only a small number of trainable parameters, making repeated refinement lightweight, scalable, and deployable.
Each client updates its local adapter using only its own embeddings, while a proximal penalty keeps the local update close to the current global adapter. An identity-preserving penalty further discourages unnecessary movement away from the upstream embedding. The updated adapters are aggregated across clients and broadcast for the next round. 

\textbf{Contributions.} We make the following contributions:

\begin{figure}[!t]
\centering
\includegraphics[width=0.65\columnwidth]{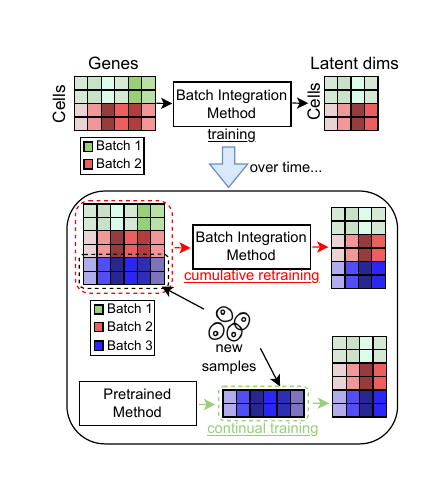}
\caption{
Two dataset evolution scenarios in single-cell analysis. Cumulative retraining recomputes embeddings by retraining the upstream model on all available data, whereas continual training fits newly arriving datasets into a fixed reference embedding without retraining on previously processed data.}
\label{fig:figure 2}
\end{figure}

\begin{itemize}
\item \textbf{Stable cell-type discriminability under heterogeneous cell-type support.} We study post-hoc single-cell embedding refinement in settings where batches contain uneven or incomplete cell-type support. \ours refines precomputed embeddings through batch-conditioned FiLM adapters and conservative regularization. This design aims to preserve cell-type discriminability when specific populations are underrepresented or absent in some batches.

\item \textbf{FL-inspired adaptation for evolving single-cell datasets.} The FL-inspired design of \ours naturally accommodates heterogeneous cell-type support and dataset evolution. We evaluate \ours in both \underline{\emph{cumulative retraining}} settings, where embeddings are periodically recomputed on all available data, and \underline{\emph{continual training}} settings, where new datasets are added to a fixed reference embedding. Figure~\ref{fig:figure 2} illustrates these two dataset evolution scenarios.
  
\item \textbf{Lightweight and deployable post-hoc layer.}
The FiLM parameterization keeps \ours computationally lightweight and deployable on limited computational resources. This makes the refinement step practical for dynamic single-cell analysis pipelines in which new batches arrive over time.
\end{itemize}
\section{Related Work} \label{sec:related_works}

\subsection{Cell-level embedding construction for single-cell integration}

Single-cell integration workflows often begin by transforming high-dimensional gene expression profiles into low-dimensional cell embeddings. Many methods can produce such embeddings. Two representative strategies are batch-agnostic linear dimensionality reduction, exemplified by principal component analysis (PCA), and probabilistic batch-aware modeling, exemplified by single-cell variational inference (scVI)~\cite{lopez2018deep}. PCA provides a simple and efficient representation but does not model batch labels directly. In contrast, scVI learns a latent representation from count data while modeling batch effects within a probabilistic generative framework. These strategies define the upstream embedding space on which downstream correction or refinement is performed. However, end-to-end representation models such as scVI couple embedding construction with batch-aware modeling, so incorporating newly arriving datasets often requires retraining or reference-based adaptation. In this work, our focus is not embedding construction itself, but post-hoc refinement of embeddings produced by arbitrary upstream methods.

\subsection{Post-hoc integration and latent embedding correction}

Harmony~\cite{korsunsky2019fast} and scArches~\cite{lotfollahi2022mapping} are closely related to \ours but address different parts of the integration workflow. Harmony performs post-hoc correction directly on precomputed embeddings. scArches focuses on mapping new query data into pretrained reference models through lightweight adaptation.

\paragraph{Post-hoc correction of precomputed embeddings.}
Harmony is a widely used method for embedding-level batch correction. It improves precomputed embeddings through iterative clustering and dataset-specific correction in latent space, without retraining the upstream representation model. Our setting differs in several important respects. Harmony is typically applied as a global correction procedure over the full embedding matrix. When new batches arrive, the usual workflow reruns correction on the updated collection of cells. This is less aligned with evolving datasets, where previously embedded cells should remain stable as new datasets arrive. In addition, our main focus is not only batch mixing, but also preserving cell-type discriminability when cell-type support varies across batches.

\paragraph{Reference-based adaptation and query mapping.}
scArches extends pretrained single-cell reference atlases through lightweight adaptation of pretrained reference models. While effective for query-to-reference mapping, it relies on compatible pretrained architectures, most commonly scVI and related variational autoencoder models. New data are incorporated through model-specific adaptation rather than direct refinement of existing embeddings. In contrast, \ours operates directly on arbitrary precomputed embeddings and requires neither a pretrained reference model nor architecture-specific adaptation. This makes the refinement procedure applicable across different embedding backbones and naturally suited to settings where datasets evolve over time.

Overall, Harmony and scArches share two related limitations. First, neither is designed for settings where cell-type support varies across batches and new datasets arrive over time. Second, both remain tied to single-cell-specific correction or adaptation frameworks. \ours addresses the first gap through an FL-inspired design that treats batches as heterogeneous local clients. It addresses the second by operating directly on embeddings and batch labels, allowing the same refinement principle to extend beyond single-cell data to heterogeneous embedding distributions.
\section{Methodology}

Given a precomputed cell-level embedding produced by an arbitrary upstream method, \ours performs lightweight post-hoc refinement directly in latent space. The key idea is to perform conservative batch-specific refinement in latent space. To this end, we construct a moment-matched target for each batch and learn a batch-conditioned FiLM adapter whose output approaches this target while remaining close to the identity map. The upstream embedding is fixed throughout training, and optimization is restricted to the adapter parameters.

\subsection{Problem Setup}

Let \(Z \in \mathbb{R}^{N \times d}\) denote a fixed latent embedding matrix for \(N\) cells and latent dimension \(d\). Let \(b_i \in \{1,\ldots,B\}\) be the batch label of cell \(i\), and define
$I_b = \{i: b_i=b\}$, $n_b = |I_b|$, and $Z_b = \{z_i: i\in I_b\}$.
\ours does not use the raw gene expression matrix or cell-type labels during adaptation. All optimization is performed using the fixed embedding \(Z\), the batch labels, and moment targets computed in the same latent space.

For coordinate-wise operations, define the empirical global mean and standard deviation as
\[
\widehat\mu = \frac{1}{N}\sum_{i=1}^N z_i,
\qquad
\widehat s = \left(\frac{1}{N}\sum_{i=1}^N (z_i-\widehat\mu)^{\odot 2}\right)^{1/2},
\]
and the corresponding batch-specific quantities as
\[
\widehat\mu_b = \frac{1}{n_b}\sum_{i\in I_b} z_i,
\qquad
\widehat s_b = \left(\frac{1}{n_b}\sum_{i\in I_b} (z_i-\widehat\mu_b)^{\odot 2}\right)^{1/2}.
\]
Here, powers, square roots, multiplication, and division are applied coordinate-wise. We use a small constant \(\epsilon>0\) for numerical stability and write
$\widehat s^{\epsilon}=\widehat s+\epsilon$ and
$\widehat s_b^{\epsilon}=\widehat s_b+\epsilon$.

\begin{definition}[Latent moment refinement target]
For each batch \(b\), define a coordinate-wise scaling vector
\[
r_b =
\begin{cases}
\widehat s^{\epsilon}\oslash \widehat s_b^{\epsilon}, & \text{if variance matching is enabled},\\
\mathbf{1}, & \text{if only mean matching is used},
\end{cases}
\]
where \(\oslash\) denotes element-wise division. For a cell \(i\in I_b\), the moment-matched target is
\[
t_i = \widehat\mu + r_b \odot (z_i-\widehat\mu_b).
\]
Equivalently, under mean-only matching, \(t_i=z_i+(\widehat\mu-\widehat\mu_b)\). In our default implementation, variance matching is enabled and \(\epsilon=10^{-6}\).
\end{definition}

The target \(t_i\) specifies a batch-level alignment objective in latent space. The learned adapter is encouraged to approach this target while remaining constrained by the adapter regularization.

This construction is compatible with a distributed implementation based on latent summary statistics and each client's local embeddings. Each client only needs to provide \(n_b\), \(\sum_{i\in I_b} z_i\), and \(\sum_{i\in I_b} z_i^{\odot 2}\) for the server to compute global latent moments. The server can then broadcast these moments back to clients for local target construction. Thus, target construction requires neither raw gene expression nor pairwise cell matching across batches.

\begin{proposition}[Latent moment normalization properties]
\label{prop:moment-target}
For every batch \(b\), the target embeddings \(T_b=\{t_i:i\in I_b\}\) have empirical mean
$\frac{1}{n_b}\sum_{i\in I_b} t_i = \widehat\mu$.
Moreover, their coordinate-wise empirical standard deviation is
$\operatorname{Std}(T_b) = r_b \odot \widehat s_b$.
Consequently, under variance matching,
$\operatorname{Std}(T_b)=\widehat s_b \odot (\widehat s^{\epsilon}\oslash \widehat s_b^{\epsilon})$,
and in the idealized case \(\epsilon=0\) with nonzero batch standard deviations, \(\operatorname{Std}(T_b)=\widehat s\) for all batches.
\end{proposition}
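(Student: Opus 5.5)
The argument is a direct computation from the definition of the moment-matched target. It uses only two facts: the map \(z_i\mapsto t_i\) is affine, and on each batch it applies the same vector \(r_b\) coordinate-wise. I will fix a batch \(b\) and work coordinate-wise throughout. All operations (\(\odot\), powers, square roots, \(\oslash\)) act independently on each of the \(d\) coordinates, so it suffices to verify each identity for a single coordinate and then reassemble the vectors.

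\textbf{Step 1 (mean).} Averaging \(t_i=\widehat\mu+r_b\odot(z_i-\widehat\mu_b)\) over \(i\in I_b\) and using linearity, I get
\[
\frac{1}{n_b}\sum_{i\in I_b}t_i=\widehat\mu+r_b\odot\Bigl(\frac{1}{n_b}\sum_{i\in I_b}z_i-\widehat\mu_b\Bigr)=\widehat\mu .
\]
The last equality holds because the bracket vanishes by the definition of \(\widehat\mu_b\). This identity holds for both choices of \(r_b\), since \(r_b\) is constant across \(i\in I_b\).

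\textbf{Step 2 (standard deviation).} Using the same empirical (population, \(1/n_b\)) convention as in the definition of \(\widehat s_b\), the mean from Step 1 gives \(t_i-\widehat\mu=r_b\odot(z_i-\widehat\mu_b)\). Hence
\[
\frac{1}{n_b}\sum_{i\in I_b}(t_i-\widehat\mu)^{\odot2}=r_b^{\odot2}\odot\widehat s_b^{\odot2}.
\]
Taking coordinate-wise square roots gives \(\operatorname{Std}(T_b)=|r_b|\odot\widehat s_b\). Every entry of \(r_b\) is positive: it is either \(1\), or a ratio of the positive quantities \(\widehat s+\epsilon\) and \(\widehat s_b+\epsilon\), with \(\epsilon>0\). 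Therefore \(|r_b|=r_b\). This positivity is the only point needing care; it would fail for a general scaling vector and is exactly why the claim holds without absolute values.

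\textbf{Step 3 (consequences).} Substituting \(r_b=\widehat s^{\epsilon}\oslash\widehat s_b^{\epsilon}\) gives the stated variance-matching formula. For the idealized case, set \(\epsilon=0\), so that \(r_b=\widehat s\oslash\widehat s_b\). This is well defined because every coordinate of \(\widehat s_b\) is assumed nonzero, and then \(\widehat s_b\odot(\widehat s\oslash\widehat s_b)=\widehat s\) coordinate-wise. Since \(b\) was arbitrary, the identity holds for all batches.

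I do not expect a real obstacle. The only subtleties are keeping the \(1/n_b\) normalization consistent with \(\widehat s_b\), and checking the sign of \(r_b\) and the nonvanishing denominator in the \(\epsilon=0\) case.
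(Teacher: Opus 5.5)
Your proposal is correct and follows the paper's proof essentially line for line. Like the paper, you average the affine target to get $\widehat\mu$, use $t_i-\widehat\mu=r_b\odot(z_i-\widehat\mu_b)$ to obtain $|r_b|\odot\widehat s_b$, drop the absolute value because $r_b$ has no negative entries, and substitute the definition of $r_b$.

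One small remark on the idealized case: your positivity argument invokes $\epsilon>0$, so for $\epsilon=0$ you should note separately that $r_b=\widehat s\oslash\widehat s_b$ is still entrywise nonnegative, which is all that $|r_b|=r_b$ requires. It is in fact positive, since the law of total variance gives $\widehat s^{\odot 2}\ge (n_b/N)\,\widehat s_b^{\odot 2}$.
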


\begin{proof}
For any batch \(b\),
\[
\frac{1}{n_b}\sum_{i\in I_b} t_i
=
\widehat\mu
+
r_b\odot
\left(
\frac{1}{n_b}\sum_{i\in I_b}(z_i-\widehat\mu_b)
\right)
=
\widehat\mu.
\]
Since \(t_i-\widehat\mu=r_b\odot(z_i-\widehat\mu_b)\), the coordinate-wise standard deviation of \(T_b\) is \(|r_b|\odot \widehat s_b\). The entries of \(r_b\) are nonnegative by construction, so this equals \(r_b\odot \widehat s_b\). The remaining statements follow by substituting the definition of \(r_b\).
\end{proof}

\begin{proposition}[Within-batch affine preservation]
\label{prop:affine-preservation}
For any two cells \(i,k\in I_b\), the target transformation satisfies
$t_i-t_k = r_b \odot (z_i-z_k)$.
Thus, the target construction aligns batch-level moments while preserving within-batch differences up to a coordinate-wise scaling.
\end{proposition}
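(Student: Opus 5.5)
The plan is a direct substitution into the definition of the latent moment refinement target, using only that both cells lie in the same batch. Fix a batch \(b\) and two cells \(i,k\in I_b\). Since \(b_i=b_k=b\), both targets use the same scaling vector \(r_b\), the same batch mean \(\widehat\mu_b\), and the same global mean \(\widehat\mu\):
\[
t_i=\widehat\mu+r_b\odot(z_i-\widehat\mu_b),\qquad t_k=\widehat\mu+r_b\odot(z_k-\widehat\mu_b).
\]

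First, subtract the two expressions. The global mean \(\widehat\mu\) cancels. Then use that coordinate-wise multiplication by the fixed vector \(r_b\) distributes over vector subtraction, \(r_b\odot u-r_b\odot v=r_b\odot(u-v)\). This gives
\[
t_i-t_k=r_b\odot\bigl((z_i-\widehat\mu_b)-(z_k-\widehat\mu_b)\bigr)=r_b\odot(z_i-z_k),
\]
because the batch mean \(\widehat\mu_b\) also cancels. This holds equally in the variance-matching case, where \(r_b=\widehat s^{\epsilon}\oslash\widehat s_b^{\epsilon}\), and in the mean-only case, where \(r_b=\mathbf 1\). The argument never uses the specific form of \(r_b\), only that it is shared by all cells in the batch. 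In the mean-only case the identity reduces to \(t_i-t_k=z_i-z_k\), so within-batch differences are preserved exactly.

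For the interpretive sentence, I would combine this identity with Proposition~\ref{prop:moment-target}. That proposition gives the alignment of batch-level moments: common mean \(\widehat\mu\) and standard deviation \(r_b\odot\widehat s_b\). The identity above gives preservation of within-batch differences up to the coordinate-wise scaling \(r_b\).

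There is no real obstacle. The only point needing care is that \(r_b\) and \(\widehat\mu_b\) depend on the batch and not on the individual cell. This is exactly why the statement is restricted to pairs within the same batch: for cells in different batches, the offsets \(\widehat\mu_b\) and scalings \(r_b\) would not cancel.
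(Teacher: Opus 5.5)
Your proof is correct and takes the same approach as the paper: substitute both target definitions, subtract, and let $\widehat\mu$ and $\widehat\mu_b$ cancel by distributivity of $\odot$ over subtraction. Your remark that the argument uses only the fact that $r_b$ and $\widehat\mu_b$ are shared across the batch explains why the identity is restricted to pairs of cells in the same batch.
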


\begin{proof}
Subtracting the two target definitions gives
$t_i-t_k = r_b\odot(z_i-\widehat\mu_b)-r_b\odot(z_k-\widehat\mu_b) = r_b\odot(z_i-z_k)$.
\end{proof}

\subsection{FiLM Adapter and Expressivity}

\ours uses a batch-conditioned FiLM adapter. Let \(\Gamma\in\mathbb{R}^{B\times d}\) collect scale vectors \(\gamma_b\), and let \(\Delta\in\mathbb{R}^{B\times d}\) collect shift vectors \(\beta_b\). The adapter parameters are \(\Theta=(\Gamma,\Delta)\), and the refined embedding for cell \(i\) is
$A_{\Theta}(z_i,b_i)=\gamma_{b_i}\odot z_i+\beta_{b_i}$.
The implementation initializes \(\gamma_b=\mathbf{1}\) and \(\beta_b=\mathbf{0}\), so the initial adapter is exactly the identity transformation.

\begin{lemma}[FiLM representability of the moment target]
\label{lem:film-representability}
The moment target belongs to the FiLM adapter class. In particular, for every batch \(b\), define
$\gamma_b^{\star}=r_b$
and
$\beta_b^{\star}=\widehat\mu-r_b\odot\widehat\mu_b$.
Then \(A_{\Theta^{\star}}(z_i,b_i)=t_i\) for all cells \(i\).
\end{lemma}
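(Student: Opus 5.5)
The plan is a direct substitution. I will start from the definition of the moment target, $t_i=\widehat\mu+r_{b}\odot(z_i-\widehat\mu_{b})$ for $i\in I_b$, and use distributivity of the coordinate-wise product over vector subtraction to rewrite it in affine form:
\[
t_i = r_b\odot z_i + \bigl(\widehat\mu - r_b\odot\widehat\mu_b\bigr).
\]
This form matches the FiLM parameterization $A_{\Theta}(z_i,b_i)=\gamma_{b_i}\odot z_i+\beta_{b_i}$. The scale multiplying $z_i$ is $r_b$, and the cell-independent offset is $\widehat\mu-r_b\odot\widehat\mu_b$.

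Next, I will fix $\Theta^\star=(\Gamma^\star,\Delta^\star)$ with rows $\gamma_b^\star=r_b$ and $\beta_b^\star=\widehat\mu-r_b\odot\widehat\mu_b$. Take any cell $i$ and set $b=b_i$, so that $i\in I_b$. Evaluating the adapter gives $A_{\Theta^\star}(z_i,b_i)=r_b\odot z_i+\widehat\mu-r_b\odot\widehat\mu_b$, which equals $t_i$ by the display above. The batches $I_1,\dots,I_B$ partition the cells, and each batch receives its own row of $\Gamma^\star$ and $\Delta^\star$, so the identity holds for all cells simultaneously.

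Two points need checking, and neither is a real obstacle. First, $\gamma_b^\star$ and $\beta_b^\star$ must be well-defined constant vectors for each batch, independent of the individual cell. This holds because $r_b$, $\widehat\mu$ and $\widehat\mu_b$ are batch-level statistics; in particular $r_b$ is finite since $\widehat s_b^{\epsilon}\ge\epsilon>0$ coordinatewise. Second, the argument must cover both branches of the definition of $r_b$. Under mean-only matching, $r_b=\mathbf 1$, so $\gamma_b^\star=\mathbf 1$ and $\beta_b^\star=\widehat\mu-\widehat\mu_b$, which recovers the translation form already noted after the definition. Finally, I may remark that this $\Theta^\star$ is consistent with Propositions~\ref{prop:moment-target} and~\ref{prop:affine-preservation}, since a coordinate-wise affine map with scale $r_b$ automatically has within-batch differences scaled by $r_b$.
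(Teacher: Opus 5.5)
Your proposal is correct and matches the paper's proof: both are the one-line substitution $A_{\Theta^{\star}}(z_i,b)=r_b\odot z_i+\widehat\mu-r_b\odot\widehat\mu_b=\widehat\mu+r_b\odot(z_i-\widehat\mu_b)=t_i$ for $i\in I_b$, and you simply read it in the opposite direction by first rewriting $t_i$ in affine form. The paper leaves implicit the checks you add, namely finiteness of $r_b$ via $\widehat s_b^{\epsilon}\ge\epsilon>0$ and the mean-only branch.
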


\begin{proof}
For any \(i\in I_b\),
$A_{\Theta^{\star}}(z_i,b) = r_b\odot z_i + \widehat\mu-r_b\odot\widehat\mu_b = \widehat\mu+r_b\odot(z_i-\widehat\mu_b) = t_i$.
\end{proof}

This lemma gives the basic mathematical reason why \ours can perform latent refinement through lightweight batch-conditioned affine adaptation: the target moment correction is itself a diagonal affine map over latent coordinates, and the FiLM adapter parameterizes exactly this family of maps.

\subsection{Latent Convex Refinement View}

The above construction gives a useful way to interpret \ours. The upstream representation model has already transformed the original high-dimensional expression profile into a fixed latent feature vector. Once these features are fixed, \ours no longer optimizes a nonlinear representation model. It optimizes a small affine adapter on top of the frozen latent representation.

For each coordinate \(j\) and each cell \(i\in I_b\), define
$\theta_{b j}=\begin{bmatrix}\gamma_{b j} \\ \beta_{b j}\end{bmatrix}$ 
and
$x_{i j}=\begin{bmatrix}z_{i j} \\ 1\end{bmatrix}$.

Then the \(j\)-th coordinate of the adapter output can be written as
$A_{\Theta}(z_i,b)_j=x_{i j}^{\top}\theta_{b j}$.
Thus, after the embedding \(Z\) and targets \(T\) are fixed, fitting the FiLM adapter reduces to a regularized linear least-squares optimization problem over frozen latent features.

A centralized counterpart of the adapter objective can be written as the global latent refinement objective
\[
\begin{aligned}
\mathcal{F}(\Theta)
&=
\sum_{b=1}^B \frac{n_b}{N}
\left[
\frac{\lambda_{\mathrm{target}}}{n_b d}
\sum_{i\in I_b}\left\|A_{\Theta}(z_i,b)-t_i\right\|_2^2
\right] \\
&\quad+
\frac{\lambda_{\mathrm{id}}}{Bd}
\left(
\|\Gamma-\mathbf{1}\|_F^2+
\|\Delta\|_F^2
\right).
\end{aligned}
\]
The weights \(n_b/N\) make the data-fitting terms a sample-size-weighted average of client objectives, matching the aggregation weights used during federated optimization. The identity regularizer matches the implementation, where deviations from \(\gamma_b=\mathbf{1}\) and \(\beta_b=\mathbf{0}\) are penalized by the mean squared parameter deviation across all batch-indexed adapter entries.

\begin{proposition}[Convex latent refinement objective]
\label{prop:global-convex-objective}
The global objective \(\mathcal{F}\) is a convex quadratic function of the adapter parameters \(\Theta=(\Gamma,\Delta)\). If \(\lambda_{\mathrm{id}}>0\), then \(\mathcal{F}\) is strongly convex and has a unique minimizer.
\end{proposition}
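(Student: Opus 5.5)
The plan is to exploit the reparameterization from the latent convex refinement view: write \(\mathcal{F}\) as a sum of decoupled quadratics in the per-batch, per-coordinate parameter vectors \(\theta_{bj}=(\gamma_{bj},\beta_{bj})^{\top}\). Since \(\|A_{\Theta}(z_i,b)-t_i\|_2^2=\sum_{j=1}^d (x_{ij}^{\top}\theta_{bj}-t_{ij})^2\), the data term becomes
\[
\frac{\lambda_{\mathrm{target}}}{Nd}\sum_{b=1}^B\sum_{j=1}^d\sum_{i\in I_b}\bigl(x_{ij}^{\top}\theta_{bj}-t_{ij}\bigr)^2 ,
\]
where the factors \(n_b/N\) and \(1/(n_b d)\) combine to \(1/(Nd)\). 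Likewise, the identity regularizer splits as
\[
\frac{\lambda_{\mathrm{id}}}{Bd}\sum_{b,j}\|\theta_{bj}-e_1\|_2^2 ,
\]
with \(e_1=(1,0)^{\top}\). Hence \(\mathcal{F}(\Theta)=\sum_{b,j} f_{bj}(\theta_{bj})\), where each \(f_{bj}\) is a quadratic in \(\mathbb{R}^2\).

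Next, compute the Hessian of each block explicitly:
\[
\nabla^2 f_{bj} = \frac{2\lambda_{\mathrm{target}}}{Nd}\sum_{i\in I_b} x_{ij}x_{ij}^{\top} + \frac{2\lambda_{\mathrm{id}}}{Bd} I_2 .
\]
The full Hessian of \(\mathcal{F}\) is block diagonal with these \(2\times 2\) blocks and is constant, so \(\mathcal{F}\) is quadratic. I will assume \(\lambda_{\mathrm{target}}\ge 0\) and \(\lambda_{\mathrm{id}}\ge 0\), as implicit for regularization weights. Each Gram sum \(\sum_i x_{ij}x_{ij}^{\top}\) is positive semidefinite, because \(v^{\top}x x^{\top}v=(x^{\top}v)^2\ge 0\). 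A sum of positive semidefinite blocks is positive semidefinite, which gives convexity.

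For strong convexity, note that when \(\lambda_{\mathrm{id}}>0\) every block satisfies \(\nabla^2 f_{bj}\succeq \frac{2\lambda_{\mathrm{id}}}{Bd}I_2\). The whole Hessian is therefore bounded below by \(\frac{2\lambda_{\mathrm{id}}}{Bd}I\), so \(\mathcal{F}\) is \(\frac{2\lambda_{\mathrm{id}}}{Bd}\)-strongly convex. Existence and uniqueness then follow from the standard argument. A strongly convex continuous function is coercive, since it grows at least quadratically, so a minimizer exists. Strict convexity makes it unique.

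Alternatively, uniqueness can be made explicit. The first-order condition in each block is a linear system with a positive definite matrix, which gives the closed form
\[
\theta_{bj}^{\star}=\Bigl(\tfrac{\lambda_{\mathrm{target}}}{Nd}\textstyle\sum_i x_{ij}x_{ij}^{\top}+\tfrac{\lambda_{\mathrm{id}}}{Bd}I_2\Bigr)^{-1}\Bigl(\tfrac{\lambda_{\mathrm{target}}}{Nd}\textstyle\sum_i x_{ij}t_{ij}+\tfrac{\lambda_{\mathrm{id}}}{Bd}e_1\Bigr).
\]

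The only delicate point is the bookkeeping in the first step. One must check that the weights combine correctly and that the targets \(t_i\) do not depend on \(\Theta\). They do not: \(t_i\) is built from \(Z\) and batch labels through the fixed moments \(\widehat\mu\), \(\widehat\mu_b\) and \(r_b\). This is what makes the data term a fixed least-squares term rather than something more complicated. The remaining steps are routine.
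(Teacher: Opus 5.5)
Your proof is correct and follows the same route as the paper. Both use the linear form $A_{\Theta}(z_i,b)_j = x_{ij}^{\top}\theta_{bj}$ to show the fitting term is a sum of squared affine functions, hence a convex quadratic, and let the identity regularizer supply positive curvature in every scale and shift coordinate; you make explicit what the paper leaves implicit, namely the block-diagonal Hessian, the modulus $2\lambda_{\mathrm{id}}/(Bd)$, the sign assumption $\lambda_{\mathrm{target}}\ge 0$, and existence of the minimizer via coercivity.
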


\begin{proof}
For fixed \(Z\) and \(T\), each scalar prediction has the linear form
\(A_{\Theta}(z_i,b)_j=x_{ij}^{\top}\theta_{bj}\). Therefore, each target-fitting term is a squared affine function of the adapter parameters, hence a convex quadratic. The identity regularizer is also a convex quadratic and contributes positive curvature to every scale and shift coordinate when \(\lambda_{\mathrm{id}}>0\). Therefore, the full objective is convex, and it is strongly convex under \(\lambda_{\mathrm{id}}>0\).
\end{proof}

This convexification applies only to the adapter stage. The upstream representation model remains unchanged throughout refinement. \ours uses the upstream method to obtain fixed latent features and then solves a well-posed convex correction problem in that latent space. This distinction is important in heterogeneous federated settings: heterogeneous local updates are regularized in a low-dimensional adapter space rather than in the full nonconvex representation model.

\subsection{Local Objective and Conservative Regularization}

At communication round \(t\), each client receives the current global adapter \(\Theta^{(t)}=(\Gamma^{(t)},\Delta^{(t)})\) and optimizes a local copy using its own data. For client \(b\), the full-batch empirical local objective is
\[
\begin{aligned}
\mathcal{L}_b(\Theta;\Theta^{(t)})
&=
\frac{\lambda_{\mathrm{target}}}{n_b d}
\sum_{i\in I_b}
\left\|A_{\Theta}(z_i,b)-t_i\right\|_2^2 \\
&\quad+
\mu\left\|\Theta-\Theta^{(t)}\right\|_F^2 \\
&\quad+
\frac{\lambda_{\mathrm{id}}}{Bd}
\left(
\|\Gamma-\mathbf{1}\|_F^2+
\|\Delta\|_F^2
\right),
\end{aligned}
\]
where
\[
\left\|\Theta-\Theta^{(t)}\right\|_F^2
=
\left\|\Gamma-\Gamma^{(t)}\right\|_F^2+
\left\|\Delta-\Delta^{(t)}\right\|_F^2.
\]
The first term fits the adapter output to the moment-matched target. The FedProx~\cite{li2020federated} term keeps the local adapter close to the current global adapter. The identity term penalizes deviations from \(\gamma_b=\mathbf{1}\) and \(\beta_b=\mathbf{0}\). In implementation, each client carries a full local copy of the adapter state. Only the data-fitting term is batch-specific. The proximal and identity terms regularize the full local copy before aggregation. Importantly, the moment correction is not enforced exactly in practice. 
The proximal and identity-preserving terms bias the adapter toward conservative latent changes, discouraging large shifts under heterogeneous batch compositions.

\begin{proposition}[Well-posed local latent optimization]
\label{prop:local-wellposed}
If \(\mu+\lambda_{\mathrm{id}}/(Bd)>0\), then \(\mathcal{L}_b(\Theta;\Theta^{(t)})\) is a strongly convex quadratic function of the adapter parameters \(\Theta\). Therefore, each local subproblem has a unique minimizer.
\end{proposition}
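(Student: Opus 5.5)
The plan is to compute the Hessian of \(\mathcal{L}_b(\cdot;\Theta^{(t)})\) with respect to the stacked parameter vector \(\Theta=(\Gamma,\Delta)\in\mathbb{R}^{2Bd}\) and show it is bounded below by a positive multiple of the identity. I will assume the standing convention that the weights are nonnegative: \(\lambda_{\mathrm{target}},\mu,\lambda_{\mathrm{id}}\ge 0\). As in the proof of Proposition~\ref{prop:global-convex-objective}, I will use the linear representation \(A_{\Theta}(z_i,b)_j=x_{ij}^{\top}\theta_{bj}\). This representation makes each summand of the data-fitting term the square of an affine function of \(\Theta\).

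First, the data-fitting term is handled. For fixed \(Z\) and \(T\) it equals \(\frac{\lambda_{\mathrm{target}}}{n_b d}\sum_{i\in I_b}\sum_{j}(x_{ij}^{\top}\theta_{bj}-t_{ij})^2\). Its Hessian is \(\frac{2\lambda_{\mathrm{target}}}{n_b d}\sum_{i,j}\tilde x_{ij}\tilde x_{ij}^{\top}\), where \(\tilde x_{ij}\in\mathbb{R}^{2Bd}\) places \(x_{ij}\) in the coordinates of \(\theta_{bj}\) and zeros elsewhere. This matrix is a nonnegative combination of rank-one outer products, hence positive semidefinite. It is only semidefinite, because it has zero curvature in every coordinate belonging to batches \(b'\neq b\) (each client carries a full local copy of the adapter but fits only its own row). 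It may also be degenerate within row \(b\), for instance when coordinate \(j\) of \(Z_b\) is constant.

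Second, the two regularizers are handled. The proximal term \(\mu\|\Theta-\Theta^{(t)}\|_F^2\) is a quadratic with Hessian \(2\mu I_{2Bd}\). The identity term \(\frac{\lambda_{\mathrm{id}}}{Bd}(\|\Gamma-\mathbf{1}\|_F^2+\|\Delta\|_F^2)\) is a quadratic with Hessian \(\frac{2\lambda_{\mathrm{id}}}{Bd}I_{2Bd}\). Both are isotropic, covering every scale and shift entry of every batch. Summing the three pieces gives
\[
\nabla^2\mathcal{L}_b \succeq 2\Bigl(\mu+\frac{\lambda_{\mathrm{id}}}{Bd}\Bigr) I_{2Bd}.
\]
Since \(\mathcal{L}_b\) is a polynomial of degree two in \(\Theta\), this Hessian is constant. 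So \(\mathcal{L}_b\) is a quadratic with positive definite Hessian whenever \(\mu+\lambda_{\mathrm{id}}/(Bd)>0\), i.e.\ it is strongly convex with modulus \(2(\mu+\lambda_{\mathrm{id}}/(Bd))\).

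Finally, uniqueness of the minimizer follows from this. A strongly convex quadratic is coercive, so a minimizer exists. Strict convexity makes it unique; explicitly, it is the unique solution of the linear system \(\nabla^2\mathcal{L}_b\,\Theta=-c\) obtained by setting the gradient to zero, with the invertible Hessian above. The only step needing care is the first: recognizing that the data term alone is not strongly convex, since it leaves the coordinates of the other batches (and possibly degenerate directions within batch \(b\)) flat. The proof must therefore rely on the isotropic regularizers to supply curvature in every direction, which is exactly what the hypothesis \(\mu+\lambda_{\mathrm{id}}/(Bd)>0\) guarantees.
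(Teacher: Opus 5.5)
Your proposal is correct and follows the same route as the paper. The data-fitting term is a convex quadratic because \(A_\Theta\) is affine in \(\Theta\), and the proximal and identity penalties together contribute the Hessian \(2(\mu+\lambda_{\mathrm{id}}/(Bd))I\), which gives strong convexity and a unique minimizer. You make the Hessian computation explicit and correctly note that the data term alone is flat in the other batches' rows, so the curvature must come from the regularizers; also, only \(\lambda_{\mathrm{target}}\ge 0\) is really needed, since \(\mu\) and \(\lambda_{\mathrm{id}}\) enter only through their combined coefficient.
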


\begin{proof}
The map \(A_{\Theta}(z_i,b)\) is affine in \(\Theta\). Hence, the target-fitting term is a convex quadratic function of \(\Theta\). The proximal and identity terms are positive quadratic penalties around fixed parameter values. Their combined Hessian contributes a positive multiple of the identity whenever \(\mu+\lambda_{\mathrm{id}}/(Bd)>0\). Therefore, the full objective is strongly convex and has a unique minimizer.
\end{proof}

The optimizer used in practice is Adam applied to mini-batches from each client loader. On each mini-batch, the target-fitting term is computed as a mean squared error over cells and latent dimensions, while the FedProx and identity terms are applied to the full local adapter state. Thus, the implementation performs stochastic iterative optimization of the above well-defined latent quadratic objective rather than updating the upstream embedding model.

\subsection{Federated Optimization Procedure}

Training proceeds in communication rounds. At the beginning of each round, the current global adapter \(\Theta^{(t)}\) is copied to every client. Each client performs local optimization on triples \((z_i,b_i,t_i)\) from its local data loader and returns the full updated adapter state, denoted \(\Theta_b^{(t+1)}\). The server then applies sample-size-weighted federated averaging:
\[
\Theta^{(t+1)}
=
\frac{1}{\sum_{b=1}^B n_b}
\sum_{b=1}^B n_b\Theta_b^{(t+1)}.
\]
This procedure follows the implementation in which each client loader uses the full local batch dataset with seeded shuffling, and client updates are weighted by the number of local cells during aggregation.

\begin{theorem}[Conservative latent refinement by \ours]
\label{thm:latent-moment-refinement}
For fixed embeddings \(Z\) and moment targets \(T\), \ours defines a latent-space refinement procedure over FiLM adapter parameters. The FiLM adapter can represent the moment-matched target exactly. Under this target, each batch is aligned to the global latent mean. Under ideal variance matching with \(\epsilon=0\) and nonzero batch standard deviations, each batch also matches the global coordinate-wise standard deviation. The resulting fixed-feature objective is a convex quadratic in the adapter parameters. The proximal and identity-preserving penalties define strongly convex local objectives that bias client updates toward conservative latent changes before sample-size-weighted aggregation.
\end{theorem}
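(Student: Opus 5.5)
This theorem collects the earlier results, so the plan is to check each clause against the proposition or lemma that already proves it. Nothing new needs to be estimated. The work is bookkeeping: the hypotheses of each invoked result must match the regime the theorem names.

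\textbf{Exact representability and moment alignment.} Lemma~\ref{lem:film-representability} gives the representability clause directly: the parameters $\gamma_b^\star=r_b$ and $\beta_b^\star=\widehat\mu-r_b\odot\widehat\mu_b$ satisfy $A_{\Theta^\star}(z_i,b_i)=t_i$ for every cell. The mean-alignment clause follows from the first part of Proposition~\ref{prop:moment-target}, namely $\frac{1}{n_b}\sum_{i\in I_b}t_i=\widehat\mu$ for each $b$. This holds for both choices of $r_b$, so it needs no assumption on $\epsilon$. For the variance clause I would use $\operatorname{Std}(T_b)=r_b\odot\widehat s_b$. With $\epsilon=0$ and every coordinate of $\widehat s_b$ nonzero, $r_b=\widehat s\oslash\widehat s_b$ is well defined and the product reduces to $\widehat s$. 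The only care needed is to record that the nonzero condition is what makes the division legitimate.

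\textbf{Convexity and well-posedness.} The convex-quadratic claim is Proposition~\ref{prop:global-convex-objective}. Its proof rests on the per-coordinate linear form $A_\Theta(z_i,b)_j=x_{ij}^\top\theta_{bj}$, so I would restate that form briefly. For the local objectives I would invoke Proposition~\ref{prop:local-wellposed}. The only point needing attention is a subtlety I would make explicit: the data-fitting term of $\mathcal{L}_b$ involves only the parameters of batch $b$, so its Hessian is singular in the coordinates $(\gamma_{b'},\beta_{b'})$ for $b'\neq b$. Strong convexity of the whole local copy therefore comes entirely from the proximal and identity terms. Their Hessian is $2(\mu+\lambda_{\mathrm{id}}/(Bd))I$, which is positive definite under the stated condition.

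\textbf{Conservative bias and aggregation (main obstacle).} The informal clause about biasing client updates toward conservative latent changes is the step I expect to be hardest, because it must be given precise content. I would do this by computing the local minimizer in closed form, coordinate by coordinate:
\begin{equation*}
\theta_{b'j}^{\mathrm{loc}}=\frac{\mu\,\theta^{(t)}_{b'j}+\frac{\lambda_{\mathrm{id}}}{Bd}e}{\mu+\frac{\lambda_{\mathrm{id}}}{Bd}}\quad(b'\neq b),\qquad e=(1,0)^\top .
\end{equation*}
For $b'=b$, the local parameter solves a ridge-type normal equation that mixes the least-squares fit toward $\theta^\star_{bj}$ with shrinkage toward $\theta^{(t)}_{bj}$ and $e$. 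This exhibits each local solution as a convex-combination-type shrinkage toward the current global adapter and the identity. Sample-size-weighted averaging preserves this, since the aggregated $\Theta^{(t+1)}$ is a convex combination of such shrunk points. I would conclude by stating the clause in exactly this sense, without claiming convergence of the federated iteration, which the theorem does not assert.
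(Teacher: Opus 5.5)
Your proposal is correct and follows the paper's proof, which likewise chains Lemma~\ref{lem:film-representability} with Propositions~\ref{prop:moment-target}, \ref{prop:global-convex-objective}, and~\ref{prop:local-wellposed}, and treats the ``conservative bias'' clause as a direct consequence of strong convexity. Your closed-form local minimizer is a correct addition that the paper does not make, with two caveats: it describes the exact minimizer of $\mathcal{L}_b$ rather than the few mini-batch Adam epochs actually run, and for $b'=b$ the shrinkage weights are positive semidefinite matrices summing to the identity rather than scalars, so the local solution need not lie in the convex hull of $\theta^\star_{bj}$, $\theta^{(t)}_{bj}$, and $e$.
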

\begin{proof}
The existence of an exact moment-target adapter follows from Lemma~\ref{lem:film-representability}, or directly from the construction \(\gamma_b^{\star}=r_b\) and \(\beta_b^{\star}=\widehat\mu-r_b\odot\widehat\mu_b\). The aligned mean and standard deviation properties follow from Proposition~\ref{prop:moment-target}. The fixed-feature global objective is convex quadratic by Proposition~\ref{prop:global-convex-objective}. The local optimization objective is strongly convex by Proposition~\ref{prop:local-wellposed}. Therefore, \ours implements a conservative latent refinement procedure by optimizing a regularized moment-alignment objective in the adapter space.
\end{proof}


\subsection{Inference and Post-hoc Refinement}

After training, the learned adapter is applied once to the full embedding matrix. Given \(Z\) and batch labels \(\{b_i\}_{i=1}^N\), the final refined embedding is
$\widetilde Z = \left\{ \gamma_{b_i}\odot z_i+\beta_{b_i} \right\}_{i=1}^N$.
The resulting embedding can be used directly in standard single-cell workflows, including visualization, cell-type classification, and integration benchmarking. Since \ours only operates on precomputed latent embeddings and optimizes only FiLM adapter parameters, it remains compatible with arbitrary upstream representation methods.
\section{Experiments}

We evaluate whether \ours stabilizes precomputed single-cell embeddings under heterogeneous and evolving batch compositions. The experiments address three questions. First, does \ours improve downstream cell-type classification on imbalanced multi-batch and cross-study datasets? Second, does it preserve affected-cell-type discriminability when a population is downsampled or removed from one batch? Third, does it remain effective when datasets arrive over time in cumulative retraining and continual integration settings?

All experiments apply \ours as a lightweight post-hoc stage on top of upstream embeddings. On each batch, local training uses Adam~\cite{kingma2015adam} with learning rate \(5\times 10^{-2}\) for three local epochs per federated round. Each local objective includes a target-matching loss toward the precomputed latent refinement targets and a FedProx proximal penalty with \(\mu=10^{-3}\). We set \(\lambda_{\mathrm{target}}=0.5\), \(\lambda_{\mathrm{id}}=10^{-3}\), enable variance matching by default, and use \(\epsilon=10^{-6}\) for numerical stability. We use a batch size of 256, and each client loader uses the full local batch dataset with deterministic shuffling.

\subsection{Benchmark}

\subsubsection{Multi-batch and Cross-study Datasets with Heterogeneous Cell-Type Support.}
We evaluate \ours on datasets that contain both batch effects and heterogeneous cell-type support. The first setting captures within-study technical variation across experimental batches, such as repeated measurements or processing runs within a study. We refer to this setting as \emph{multi-batch heterogeneity}. The second setting captures between-study variation from independently designed studies that may differ in protocol, platform, or laboratory. We refer to this setting as \emph{cross-study heterogeneity}. Both settings become more difficult when cell-type support varies across batches. For multi-batch heterogeneity, we use real scRNA-seq datasets from~\cite{maan2024characterizing}: the imbalanced four-batch PBMC dataset~\cite{ding2020systematic} and the six-batch mouse hindbrain development (MHD) dataset~\cite{vladoiu2019childhood}. To evaluate cross-study heterogeneity, we use a two-batch Human Pancreas Multi-Study (HPMS) dataset.

For all datasets, we select 2,500 highly variable genes and normalize the data using Scanpy~\cite{wolf2018scanpy} by scaling each cell's read counts to 10,000 followed by a log1p transformation. For HPMS, we adopt the preprocessed version~\cite{scvi_surgery_pipeline} used by the scVI$-$scArches pipeline, where gene filtering and highly variable gene selection have already been performed. Accordingly, we apply only normalization and log1p transformation. Dataset summary statistics are reported in Table~\ref{tab:dataset_statistics} of Appendix~\ref{app:data_stats}.

\subsubsection{Metrics.}
We use macro-F1 to evaluate downstream cell-type classification under varying cell-type support. We assess batch integration performance using the scIB benchmarking framework~\cite{luecken2022benchmarking}, including biological conservation metrics (NMI, ARI, ASW, isolated-label F1, cLISI) and batch-removal metrics (batch ASW, iLISI, kBET, graph connectivity, and PCR). Full metric definitions and aggregation procedures are provided in Appendix~\ref{app:complete_scib}.

\subsubsection{Baselines.}
We compare \ours with Harmony and scArches for cell-type classification. We also evaluate whether \ours can refine PCA and scVI embeddings under cumulative retraining and continual integration settings. For PCA, we fix the latent dimensionality to 40. For scVI, Harmony, and scArches, we use the default settings from their respective implementations.

\subsection{Results}
\subsubsection{Cell-Type Classification under Heterogeneous Batch Composition.}

\begin{table*}[!t]
\centering
\small
\setlength{\tabcolsep}{3pt}
\resizebox{\textwidth}{!}{
\begin{tabular}{c|c|cc|cc|cc}
\hline
\makecell[c]{\textbf{Upstream}\\\textbf{Method}} & \textbf{Model}
& \multicolumn{2}{c|}{\textbf{4-batch PBMC}}
& \multicolumn{2}{c|}{\textbf{6-batch MHD}}
& \multicolumn{2}{c}{\textbf{2-batch HPMS}} \\
\cline{3-8}
& & \textbf{Macro-F1} & \textbf{Runtime (s)}
  & \textbf{Macro-F1} & \textbf{Runtime (s)}
  & \textbf{Macro-F1} & \textbf{Runtime (s)} \\
\hline

\multirow{2}{*}{PCA}
& Harmony
& $0.6633_{\pm 0.0044}$ & $14.2332_{\pm 1.7650}$
& $0.3829_{\pm 0.0023}$ & $3.2045_{\pm 0.3022}$
& $0.8706_{\pm 0.0007}$ & $21.2656_{\pm 1.7654}$ \\

& \cellcolor{gray!12}\ours
& \cellcolor{gray!12}$\mathbf{0.7594}_{\pm 0.0008}$ & \cellcolor{gray!12}$\mathbf{5.3492}_{\pm 0.9618}$
& \cellcolor{gray!12}$\mathbf{0.6567}_{\pm 0.001}$ & \cellcolor{gray!12}$\mathbf{1.98}_{\pm 0.2187}$
& \cellcolor{gray!12}$\mathbf{0.8710}_{\pm 0.0010}$ & \cellcolor{gray!12}$\mathbf{5.4333}_{\pm 0.5713}$ \\

\hline\hline

\multirow{3}{*}{scVI}
& scArches
& $0.4415_{\pm 0.0000}$ & $16.6963_{\pm 2.7281}$
& $0.1643_{\pm 0.0000}$ & $14.8221_{\pm 0.2786}$
& $0.7745_{\pm 0.0000}$ & $38.0826_{\pm 0.5940}$ \\

& Harmony
& $0.7010_{\pm 0.0071}$ & $7.1867_{\pm 0.9681}$
& $0.2683_{\pm 0.0051}$ & $\mathbf{5.9330}_{\pm 4.3050}$
& $0.8502_{\pm 0.002}$ & $12.0450_{\pm 1.7893}$ \\

& \cellcolor{gray!12}\ours
& \cellcolor{gray!12}$\mathbf{0.7504}_{\pm 0.0001}$ & \cellcolor{gray!12}$\mathbf{0.5540}_{\pm 0.1382}$
& \cellcolor{gray!12}$\mathbf{0.4747}_{\pm 0.0002}$ & \cellcolor{gray!12}$18.4768_{\pm 0.6348}$
& \cellcolor{gray!12}$\mathbf{0.8613}_{\pm 0.0002}$ & \cellcolor{gray!12}$\mathbf{0.8574}_{\pm 0.1798}$ \\

\hline
\end{tabular}
}
\caption{
Measurement of cell-type classification performance using macro-F1 score (the higher, the better).
All results are reported as mean $\pm$ standard deviation over 5 independent runs.
\ours improves downstream classification performance across datasets and embedding backbones, with competitive runtime in most settings.
}
\label{tab:f1_macro}
\end{table*}

We train a balanced logistic regression classifier on the refined embeddings to predict cell-type labels. A fixed stratified train/test split (80\% / 20\%) is used across all models to ensure fair comparison. 

Table~\ref{tab:f1_macro} shows that \ours improves macro-F1 across all datasets and upstream embeddings. The largest gains occur on MHD, where cell-type support differs most strongly across batches. This suggests that conservative refinement becomes increasingly useful as composition heterogeneity increases. Notably, the improvement persists on both PCA and scVI embeddings, indicating that the refinement effect is not tied to a specific upstream representation.


\subsubsection{Stability under Imbalanced Batch Compositions.}
\begin{figure}[!t]
\centering
\includegraphics[width=0.7\linewidth]{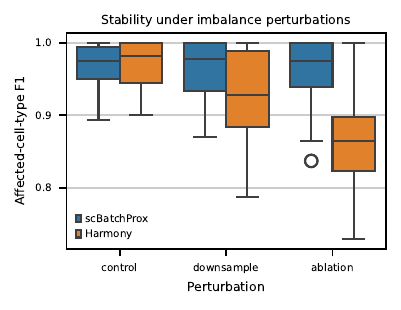}
\caption{
Stability under imbalance perturbations measured using affected-cell-type F1 on post-hoc refinement of PCA embeddings. \ours maintains substantially more stable affected-cell-type F1 across perturbation regimes.
}
\label{fig:figure 5}
\end{figure}

To evaluate stability under reduced cell-type support, we use a balanced two-batch PBMC dataset derived from~\cite{ding2020systematic} and used in~\cite{maan2024characterizing}, containing six shared cell populations with 200 cells per cell type in each batch. We perturb one selected cell type in one batch using two regimes: \textbf{downsample}, where only 10\% of cells are retained, and \textbf{ablation}, where the population is removed entirely. We report affected-cell-type F1, defined as the F1 score computed only on the perturbed cell type after post-hoc refinement of PCA embeddings.

As shown in Figure~\ref{fig:figure 5}, under balanced conditions, both Harmony and \ours achieve similarly strong affected-cell-type F1 scores. However, as cell-type support becomes reduced or entirely absent within one batch, Harmony exhibits substantially larger degradation, particularly in the ablation setting. In contrast, \ours maintains substantially more stable performance across perturbation regimes, suggesting that conservative latent refinement better preserves affected-cell-type discriminability in this perturbation setting.

\subsubsection{Dataset Evolution.}

We next ask whether \ours remains effective as datasets evolve over time. We first evaluate continual training, where newly arriving datasets are incorporated without retraining on previously processed data. The initial protocols define a reference embedding, and later datasets are added incrementally. Figure~\ref{fig:figure 4} shows that \ours consistently improves aggregate scIB scores across both protocol arrival orders, indicating that conservative latent refinement remains effective under continual dataset growth.

\begin{figure}[!t]
\centering
\includegraphics[width=0.9\columnwidth]{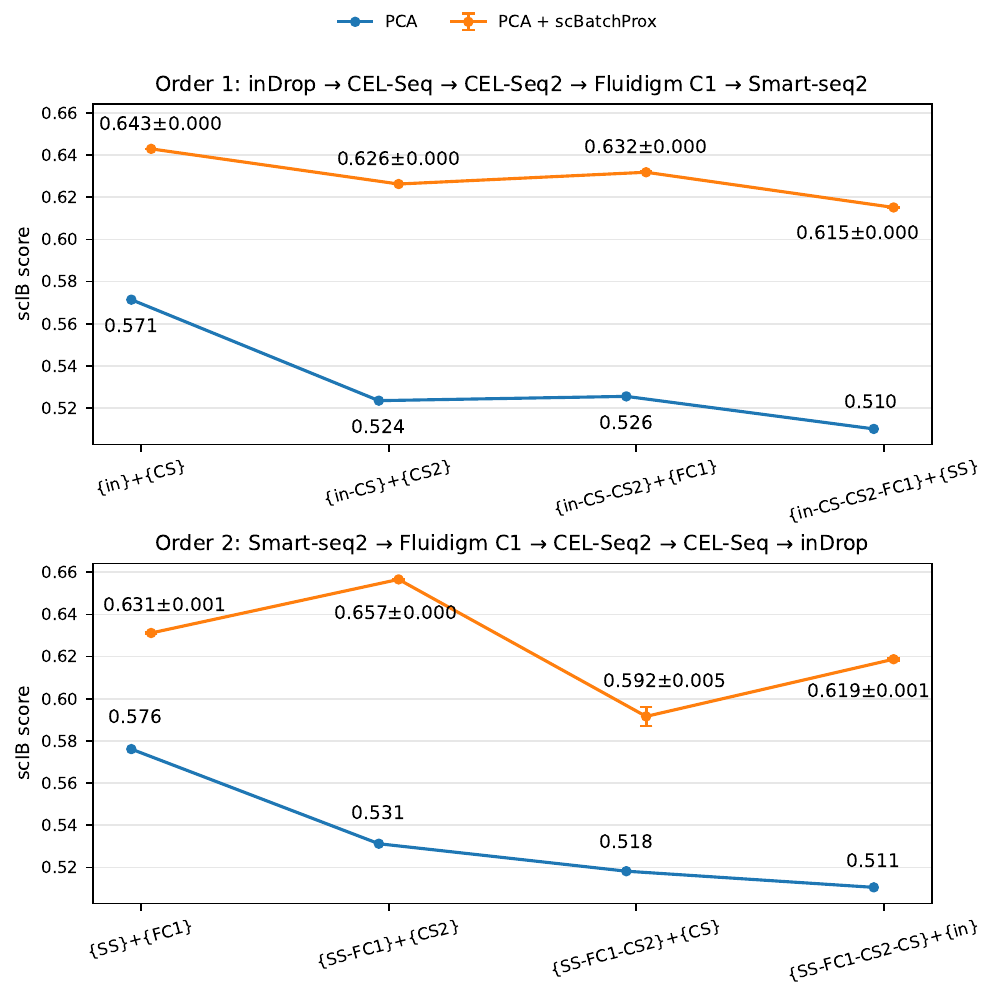}
\caption{
Aggregate scIB scores for continual training under protocol arrival:
(a) technological progression order
(inDrop $\to$ CEL-Seq $\to$ CEL-Seq2 $\to$ Fluidigm C1 $\to$ Smart-seq2)
and
(b) reverse order
(Smart-seq2 $\to$ Fluidigm C1 $\to$ CEL-Seq2 $\to$ CEL-Seq $\to$ inDrop).
Continual training is commonly encountered in practice, yet most end-to-end 
batch-aware models do not support it. Despite this constraint, \ours consistently 
improves embedding quality across both arrival orders.
}
\label{fig:figure 4}
\end{figure}

We additionally evaluate cumulative retraining, where all available datasets are re-embedded whenever new data arrive. Similar improvements are observed across both protocol arrival orders (Figure~\ref{fig:figure 3} of Appendix~\ref{app:wall_clock}). Runtime comparisons in Table~\ref{tab:wall-clock-table} of Appendix~\ref{app:wall_clock} further show that cumulative runtime grows substantially faster for workflows that repeatedly recompute upstream embeddings, whereas \ours adds only a lightweight post-hoc refinement stage.

\subsubsection{Beyond Single-Cell Embeddings.}

\begin{table}[!t]
\centering
\small
\setlength{\tabcolsep}{6pt}
\resizebox{\columnwidth}{!}{
\begin{tabular}{c|c|cc}
\hline
\makecell[c]{\textbf{Upstream}\\\textbf{Method}} & \textbf{Model}
& \multicolumn{2}{c}{\textbf{QM8}} \\
\cline{3-4}
& & \textbf{Macro-F1} & \textbf{Runtime (s)} \\
\hline

\multirow[c]{2}{*}{Morgan}
& Harmony
& $0.3065_{\pm 0.0108}$ & $22.6302_{\pm 6.1664}$ \\

& \cellcolor{gray!12} \ours
& \cellcolor{gray!12}$\mathbf{0.6996}_{\pm 0.0000}$
& \cellcolor{gray!12}$\mathbf{8.29}_{\pm 2.26}$ \\

\hline
\end{tabular}
}
\caption{Representation quality on QM8 measured by macro-F1 on classification over structurally defined groups.}
\label{tab:quantum}
\end{table}

We demonstrate the broader applicability of \ours beyond single-cell data using the QM8 dataset~\cite{qm8data} from MoleculeNet~\cite{wu2018moleculenetbenchmarkmolecularmachine}. This dataset consists of small molecules with quantum mechanical targets. Molecules are converted from SMILES strings into fixed-length feature vectors using Morgan fingerprints~\cite{morgan1965fingerprint}. To simulate heterogeneous and imbalanced latent distributions, we partition samples based on heavy atom count, resulting in structurally distinct groups with substantially different sample support (see Appendix~\ref{app:stat_qm8}, Table~\ref{tab:qm8_batches}). These fingerprint representations are treated as embeddings for post-hoc refinement.

We evaluate discriminability across these structurally defined groups using macro-F1. As shown in Table~\ref{tab:quantum}, \ours substantially outperforms Harmony on this task. Notably, the setting contains neither gene-expression measurements nor cell-type annotations, indicating that the benefits of conservative refinement are not tied to single-cell-specific assumptions. This result supports the broader applicability of \ours to heterogeneous embedding distributions beyond single-cell data.

\section{Ablation study}

\begin{table}[t]
\caption{
Ablation study measured using macro-F1 score (higher is better). 
All results are reported as mean $\pm$ standard deviation over 5 independent runs.
}
\label{tab:ablation_macrof1}

\centering
\small
\setlength{\tabcolsep}{4pt}

\begin{tabular}{lccc}
\hline
\textbf{Variant}
& \textbf{4-batch PBMC}
& \textbf{6-batch MHD}
& \textbf{2-batch HPMS} \\
\hline

Full model
& $\mathbf{0.7585}_{\pm 0.0009}$
& $0.6140_{\pm 0.0006}$
& $0.8725_{\pm 0.0005}$ \\

mean-only
& $0.7478_{\pm 0.0006}$
& $0.6173_{\pm 0.0004}$
& $0.8718_{\pm 0.0006}$ \\

w/o FedProx
& $0.7471_{\pm 0.0000}$
& $\mathbf{0.6571}_{\pm 0.0007}$
& $\mathbf{0.8741}_{\pm 0.0011}$ \\

w/o identity
& $0.7577_{\pm 0.0000}$
& $0.6121_{\pm 0.0011}$
& $0.8716_{\pm 0.0013}$ \\

\hline
\end{tabular}
\end{table}

The ablation results show that the contribution of each regularization component depends on dataset heterogeneity. Identity-preserving regularization is consistently beneficial, whereas the effect of FedProx varies across datasets. In particular, removing FedProx improves performance on MHD and HPMS, suggesting that highly heterogeneous datasets may benefit from more flexible local adaptation. Mean-only refinement generally underperforms the full model, supporting the use of variance-aware moment refinement.



\section{Conclusion}

We introduced \ours, a lightweight post-hoc latent refinement framework for stabilizing single-cell embeddings under heterogeneous and evolving batch compositions. By operating directly on precomputed embeddings, \ours improves downstream cell-type classification and maintains more stable affected-cell-type discriminability under reduced cell-type support. The method remains effective in both cumulative retraining and continual integration settings while requiring only lightweight adapter optimization. 
Future work will investigate when post-hoc refinement becomes preferable to repeated global retraining and whether atlas-scale integration can be decomposed into many smaller refinement tasks under limited computational resources.

\section*{Acknowledgments}
We acknowledge the Auburn University Easley Cluster for support of this work.

\bibliography{ref}

@article{macosko2015highly,
  title={{Highly parallel genome-wide expression profiling of individual cells using nanoliter droplets}},
  author={Macosko, Evan Z. and Basu, Anindita and Satija, Rahul and Nemesh, James and Shekhar, Karthik and Goldman, Melissa and Tirosh, Itay and Bialas, Allison R and Kamitaki, Nolan and Martersteck, Emily M. and  Trombetta, John J. and Weitz, David A. and Sanes, Joshua R. and Shalek, Alex K. and Regev, Aviv and McCarroll, Steven A.},
  journal={Cell},
  volume={161},
  number={5},
  pages={1202--1214},
  year={2015},
  publisher={Elsevier}
}

@article{klein2015droplet,
  title={{Droplet barcoding for single-cell transcriptomics applied to embryonic stem cells}},
  author={Klein, Allon M. and Mazutis, Linas and Akartuna, Ilke and Tallapragada, Naren and Veres, Adrian and Li, Victor and Peshkin, Leonid and Weitz, David A. and Kirschner, Marc W.},
  journal={Cell},
  volume={161},
  number={5},
  pages={1187--1201},
  year={2015},
  publisher={Elsevier}
}

@article{satpathy2019massively,
  title={{Massively parallel single-cell chromatin landscapes of human immune cell development and intratumoral T cell exhaustion}},
  author={Satpathy, Ansuman T. and Granja, Jeffrey M. and Yost, Kathryn E. and Qi, Yanyan and Meschi, Francesca and McDermott, Geoffrey P. and Olsen, Brett N. and Mumbach, Maxwell R. and Pierce, Sarah E. and Corces, M. Ryan and  Shah, Preyas and Bell, Jason C. and Jhutty, Darisha and Nemec, Corey M. and Wang, Jean and Wang, Li and Yin, Yifeng and Giresi, Paul G. and Chang, Anne Lynn S. and Zheng, Grace X. Y. and Greenleaf, William J. and Chang, Howard Y.},
  journal={Nature Biotechnology},
  volume={37},
  number={},
  pages={925--936},
  year={2019},
  publisher={Nature Publishing Group US New York}
}

@article{datlinger2021ultra,
  title={{Ultra-high-throughput single-cell RNA sequencing and perturbation screening with combinatorial fluidic indexing}},
  author={Datlinger, Paul and Rendeiro, Andr{\'e} F and Boenke, Thorina and Senekowitsch, Martin and Krausgruber, Thomas and Barreca, Daniele and Bock, Christoph},
  journal={Nature Methods},
  volume={18},
  number={6},
  pages={635--642},
  year={2021},
  publisher={Nature Publishing Group US New York}
}

@article{lopez2018deep,
  title={{Deep generative modeling for single-cell transcriptomics}},
  author={Lopez, Romain and Regier, Jeffrey and Cole, Michael B. and Jordan, Michael I. and Yosef, Nir},
  journal={Nature Methods},
  volume={15},
  number={},
  pages={1053--1058},
  year={2018},
  publisher={Nature Publishing Group US New York}
}

@inproceedings{li2020federated,
  title={{Federated optimization in heterogeneous networks}},
  author={Li, Tian and Sahu, Anit Kumar and Zaheer, Manzil and Sanjabi, Maziar and Talwalkar, Ameet and Smith, Virginia},
  booktitle={Proceedings of Machine Learning and Systems},
  volume={2},
  pages={429--450},
  year={2020},
  publisher={mlsys.org},
  address={Austin, TX, USA}
}

@article{luecken2022benchmarking,
  title={{Benchmarking atlas-level data integration in single-cell genomics}},
  author={Luecken, Malte D and B{\"u}ttner, Maren and Chaichoompu, Kridsadakorn and Danese, Anna and Interlandi, Marta and M{\"u}ller, Michaela F and Strobl, Daniel C and Zappia, Luke and Dugas, Martin and Colom{\'e}-Tatch{\'e}, Maria and  Theis, Fabian J.},
  journal={Nature Methods},
  volume={19},
  number={},
  pages={41--50},
  year={2022},
  publisher={Nature Publishing Group US New York}
}

@article{wolf2018scanpy,
  title={{SCANPY: large-scale single-cell gene expression data analysis}},
  author={Wolf, F. Alexander and Angerer, Philipp and Theis, Fabian J.},
  journal={Genome Biology},
  volume={19},
  number={},
  pages={15},
  year={2018},
  publisher={Springer}
}

@inproceedings{perez2018film,
  title={{Film: Visual reasoning with a general conditioning layer}},
  author={Perez, Ethan and Strub, Florian and De Vries, Harm and Dumoulin, Vincent and Courville, Aaron},
  booktitle={Proceedings of the Thirty-Second AAAI Conference on Artificial Intelligence and Thirtieth Innovative Applications of Artificial Intelligence Conference and Eighth AAAI Symposium on Educational Advances in Artificial Intelligence},
  volume={32},
  pages={3942--3951},
  year={2018},
  publisher={AAAI Press},
  address={New Orleans, Louisiana, USA}
}

@inproceedings{kingma2015adam,
  title={{Adam: A Method for Stochastic Optimization}},
  author={Kingma, Diederik P. and Ba, Jimmy},
  booktitle={3rd International Conference on Learning Representations, ICLR 2015},
  volume={},
  pages={},
  year={2015},
  publisher={},
  address={San Diego, CA, USA}
}

@article{morgan1965fingerprint,
  title={{The Generation of a Unique Machine Description for Chemical Structures-A Technique Developed at Chemical Abstracts Service}},
  author={Morgan, Harry L.},
  journal={Journal of Chemical Documentation},
  volume={5},
  number={2},
  pages={107--113},
  year={1965},
  publisher={ACS Publications}
}

@misc{wu2018moleculenetbenchmarkmolecularmachine,
      title={{MoleculeNet: A Benchmark for Molecular Machine Learning}}, 
      author={Zhenqin Wu and Bharath Ramsundar and Evan N. Feinberg and Joseph Gomes and Caleb Geniesse and Aneesh S. Pappu and Karl Leswing and Vijay Pande},
      eprint={1703.00564},
      archivePrefix={arXiv},
      year={2018},
      url={https://arxiv.org/abs/1703.00564}, 
}

@article{korsunsky2019fast,
  title={{Fast, sensitive and accurate integration of single-cell data with Harmony}},
  author={Korsunsky, Ilya and Millard, Nghia and Fan, Jean and Slowikowski, Kamil and Zhang, Fan and Wei, Kevin and Baglaenko, Yuriy and Brenner, Michael and Loh, Po-ru and Raychaudhuri, Soumya},
  journal={Nature Methods},
  volume={16},
  number={},
  pages={1289--1296},
  year={2019},
  publisher={Nature Publishing Group US New York}
}

@article{lotfollahi2022mapping,
  title={{Mapping single-cell data to reference atlases by transfer learning}},
  author={Lotfollahi, Mohammad and Naghipourfar, Mohsen and Luecken, Malte D. and Khajavi, Matin and B{\"u}ttner, Maren and Wagenstetter, Marco and Avsec, {\v Z}iga and Gayoso, Adam and Yosef, Nir and Interlandi, Marta and Rybakov, Sergei and Misharin, Alexander V. and Theis, Fabian J.},
  journal={Nature Biotechnology},
  volume={40},
  number={},
  pages={121--130},
  year={2022},
  publisher={Nature Publishing Group US New York}
}

@article{maan2024characterizing,
  title={{Characterizing the impacts of dataset imbalance on single-cell data integration}},
  author={Maan, Hassaan and Zhang, Lin and Yu, Chengxin and Geuenich, Michael J. and Campbell, Kieran R. and Wang, Bo},
  journal={Nature Biotechnology},
  volume={42},
  pages={1899--1908},
  year={2024},
  publisher={Nature Publishing Group US New York}
}

@article{vladoiu2019childhood,
  title={{Childhood cerebellar tumours mirror conserved fetal transcriptional programs}},
  author={Vladoiu, Maria C. and El-Hamamy, Ibrahim and Donovan, Laura K. and Farooq, Hamza and Holgado, Borja L. and Sundaravadanam, Yogi and Ramaswamy, Vijay and Hendrikse, Liam D. and Kumar, Sachin and Mack, Stephen C. and Lee, John J. Y. and Fong, Vernon and Juraschka, Kyle and Przelicki, David and Michealraj, Antony and Skowron, Patryk and Luu, Betty and Suzuki, Hiromichi and Morrissy, A. Sorana and Cavalli, Florence M. G. and Garzia, Livia and Daniels, Craig and Wu, Xiaochong and Qazi, Maleeha A. and Singh, Sheila K. and Chan, Jennifer A. and Marra, Marco A. and Malkin, David and Dirks, Peter and Heisler, Lawrence and Pugh, Trevor and Ng, Karen and Notta, Faiyaz and Thompson, Eric M. and Kleinman, Claudia L. and Joyner, Alexandra L. and Jabado, Nada and Stein, Lincoln and Taylor, Michael D.},
  journal={Nature},
  volume={572},
  number={7767},
  pages={67--73},
  year={2019},
  publisher={Nature Publishing Group UK London}
}

@article{ding2020systematic,
  title={{Systematic comparison of single-cell and single-nucleus RNA-sequencing methods}},
  author={Ding, Jiarui and Adiconis, Xian and Simmons, Sean K. and Kowalczyk, Monika S. and Hession, Cynthia C. and Marjanovic, Nemanja D. and Hughes, Travis K. and Wadsworth, Marc H. and Burks, Tyler and Nguyen, Lan T. and Kwon, John Y. H. and Barak, Boaz and Ge, William and Kedaigle, Amanda J. and Carroll, Shaina and Li, Shuqiang and Hacohen, Nir and Rozenblatt-Rosen, Orit and Shalek, Alex K. and Villani, Alexandra-Chloé and Regev, Aviv and Levin, Joshua Z.},
  journal={Nature Biotechnology},
  volume={38},
  number={6},
  pages={737--746},
  year={2020},
  publisher={Nature Publishing Group US New York}
}

@misc(scvi_surgery_pipeline,
    author = {Lotfollahi, Mohammad and Rybakov, Sergei and Naghipourfar, Mohsen},
    title = {{Unsupervised surgery pipeline with SCVI}},
    howpublished="\url{https://docs.scarches.org/en/latest/scvi_surgery_pipeline.html}",
    note="Accessed: 2026-05-29",
    year = {2025}
  )

@misc(qm8data,
    author = {Wu, Zhenqin and Ramsundar, Bharath and Feinberg, Evan N. and Gomes, Joseph and Geniesse, Caleb and Pappu, Aneesh S. and Leswing, Karl},
    title = {{Dataset Collection}},
    howpublished="\url{https://moleculenet.org/datasets-1}",
    note="Accessed: 2026-05-29",
    year = {2026}
  )


\appendix

\section{Data Statistics} \label{app:data_stats}

Table~\ref{tab:dataset_statistics} summarizes the statistics of all datasets used in this work, including the batch-wise cell-type compositions to illustrate the degree of compositional imbalance across batches. Among the three datasets, 6-batch MHD exhibits the most severe heterogeneity in cell-type distributions, with several cell types appearing exclusively or predominantly in specific batches.

\begin{table*}[!htbp]
\centering
\small
\setlength{\tabcolsep}{4pt}
\renewcommand{\arraystretch}{1.15}
\begin{tabularx}{\textwidth}{p{2.2cm} p{3.2cm} p{1.4cm} X}
\hline
Dataset & Batch & \# Cells & \# Cells per Cell Type \\
\hline

\multirow{4}{=}{4-batch PBMC}
& 10x & 3,222 &
Cytotoxic T: 962; CD4+ T: 960; B: 346; NK: 194; CD14+ mono: 354; CD16+ mono: 98; DC: 38; pDC: 0; Megakaryocyte: 270; Unassigned: 0
\\

& CEL & 526 &
Cytotoxic T: 174; CD4+ T: 160; B: 80; NK: 42; CD14+ mono: 31; CD16+ mono: 21; DC: 0; pDC: 0; Megakaryocyte: 18; Unassigned: 0
\\

& SeqWell & 3,773 &
Cytotoxic T: 1,278; CD4+ T: 566; B: 527; NK: 0; CD14+ mono: 1,255; CD16+ mono: 0; DC: 37; pDC: 26; Megakaryocyte: 38; Unassigned: 46
\\

& SmartSeq2 & 526 &
Cytotoxic T: 193; CD4+ T: 112; B: 79; NK: 50; CD14+ mono: 60; CD16+ mono: 18; DC: 0; pDC: 0; Megakaryocyte: 14; Unassigned: 0
\\

\hline

\multirow{6}{=}{6-batch MHD}
& e8.0\_sce31053 & 5,496 &
Hindbrain glutamatergic: 0; Mesenchyme: 95; Dorsal hindbrain: 1,843; Gliogenic progenitors: 0; VZ progenitors: 0; Caudal: 1,218; Hindbrain roof plate: 1,204; Neural crest: 452; Hindbrain GABAergic: 0; Early ectoderm: 748; Proliferating VZ progenitors: 0; Purkinje cells: 0; Erythrocytes and erythrocyte progenitors: 0; Anterior: 695; Hindbrain: 0; Endothelial cells: 0; Sensory neuron: 0; Extraembryonic endoderm: 342; Endoderm: 187
\\

& e9.0\_sce31053 & 5,103 &
Hindbrain glutamatergic: 4; Mesenchyme: 1,624; Dorsal hindbrain: 0; Gliogenic progenitors: 0; VZ progenitors: 0; Caudal: 38; Hindbrain roof plate: 0; Neural crest: 240; Hindbrain GABAergic: 0; Early ectoderm: 51; Proliferating VZ progenitors: 0; Purkinje cells: 0; Erythrocytes and erythrocyte progenitors: 0; Anterior: 2; Hindbrain: 0; Endothelial cells: 0; Sensory neuron: 639; Extraembryonic endoderm: 0; Endoderm: 0
\\

& e10.0\_sce31053 & 6,606 &
Hindbrain glutamatergic: 62; Mesenchyme: 1,825; Dorsal hindbrain: 0; Gliogenic progenitors: 0; VZ progenitors: 0; Caudal: 0; Hindbrain roof plate: 0; Neural crest: 240; Hindbrain GABAergic: 0; Early ectoderm: 0; Proliferating VZ progenitors: 0; Purkinje cells: 0; Erythrocytes and erythrocyte progenitors: 159; Anterior: 0; Hindbrain: 0; Endothelial cells: 0; Sensory neuron: 0; Extraembryonic endoderm: 0; Endoderm: 0
\\

& E10\_sce27998 & 7,725 &
Hindbrain glutamatergic: 4,758; Mesenchyme: 0; Dorsal hindbrain: 0; Gliogenic progenitors: 0; VZ progenitors: 1,521; Caudal: 0; Hindbrain roof plate: 0; Neural crest: 0; Hindbrain GABAergic: 70; Early ectoderm: 0; Proliferating VZ progenitors: 0; Purkinje cells: 737; Erythrocytes and erythrocyte progenitors: 0; Anterior: 0; Hindbrain: 670; Endothelial cells: 667; Sensory neuron: 0; Extraembryonic endoderm: 0; Endoderm: 0
\\

& E12\_sce27998 & 5,460 &
Hindbrain glutamatergic: 3,710; Mesenchyme: 0; Dorsal hindbrain: 0; Gliogenic progenitors: 36; VZ progenitors: 2; Caudal: 0; Hindbrain roof plate: 0; Neural crest: 0; Hindbrain GABAergic: 715; Early ectoderm: 0; Proliferating VZ progenitors: 703; Purkinje cells: 32; Erythrocytes and erythrocyte progenitors: 0; Anterior: 0; Hindbrain: 0; Endothelial cells: 0; Sensory neuron: 0; Extraembryonic endoderm: 0; Endoderm: 0
\\

& E14\_sce27998 & 3,730 &
Hindbrain glutamatergic: 1,089; Mesenchyme: 0; Dorsal hindbrain: 0; Gliogenic progenitors: 1,739; VZ progenitors: 0; Caudal: 0; Hindbrain roof plate: 0; Neural crest: 0; Hindbrain GABAergic: 49; Early ectoderm: 0; Proliferating VZ progenitors: 70; Purkinje cells: 0; Erythrocytes and erythrocyte progenitors: 0; Anterior: 0; Hindbrain: 0; Endothelial cells: 0; Sensory neuron: 0; Extraembryonic endoderm: 0; Endoderm: 0
\\

\hline

\multirow{2}{=}{2-batch HPMS}
& 0 & 12,720 &
Beta: 4,289; Alpha: 3,595; Ductal: 1,642; Delta: 899; Endothelial: 769; Acinar: 610; Gamma: 418; Stellate: 498
\\

& 1 & 2,961 &
Beta: 796; Alpha: 1,109; Ductal: 462; Delta: 142; Endothelial: 67; Acinar: 103; Gamma: 219; Stellate: 63
\\

\hline
\end{tabularx}
\caption{Cell-type composition imbalance across batches}
\label{tab:dataset_statistics}
\end{table*}

\section{METRICS $-$ DEFINITIONS AND INTERPRETATION} \label{app:complete_scib}

We evaluate data integration performance using the scIB benchmarking framework~\cite{luecken2022benchmarking}, which provides a standardized collection of quantitative metrics to assess the trade-off between batch-effect removal and biological signal preservation. All metrics are computed on the low-dimensional integrated representations produced by each method unless stated otherwise.

\subsection{Biological Conservation Metrics}

Biological conservation metrics quantify how well biologically meaningful structure, in particular cell-type identity and local neighborhood relationships, is preserved after integration.

\paragraph{Clustering-based label agreement (NMI, ARI).}
We assess global agreement between unsupervised clustering results and ground-truth cell-type annotations using normalized mutual information (NMI) and adjusted Rand index (ARI). Let $C$ denote the clustering assignment obtained by applying K-means to the integrated embedding and $Y$ the true cell-type labels. NMI is defined as
\[
\mathrm{NMI}(C, Y) = \frac{2 I(C; Y)}{H(C) + H(Y)},
\]
where $I(\cdot;\cdot)$ denotes mutual information and $H(\cdot)$ denotes entropy. ARI measures pairwise agreement between $C$ and $Y$, adjusted for chance, with values closer to $1$ indicating better recovery of cell-type structure.

\paragraph{Cell-type average silhouette width (ASW).}
The silhouette width for a cell $i$ is defined as
\[
s(i) = \frac{b(i) - a(i)}{\max\{a(i), b(i)\}},
\]
where $a(i)$ is the average distance from $i$ to other cells of the same cell type and $b(i)$ is the minimum average distance to cells of a different cell type. The cell-type ASW is computed by averaging $s(i)$ across all cells and rescaled to $[0,1]$ as
\[
\mathrm{ASW}_{\text{cell}} = \frac{\mathrm{ASW} + 1}{2},
\]
with higher values indicating better separation between cell types.

\paragraph{Isolated-label F1 score.}
The isolated-label F1 score evaluates preservation of rare or batch-specific cell types that appear in only a subset of batches. For each isolated label, precision and recall are computed based on neighborhood consistency, and the F1 score is defined as
\[
\mathrm{F1} = \frac{2 \cdot \mathrm{precision} \cdot \mathrm{recall}}{\mathrm{precision} + \mathrm{recall}}.
\]
Higher values indicate improved conservation of rare cell populations.

\paragraph{Cell-type local inverse Simpson’s index (cLISI).}
cLISI measures the diversity of cell-type labels within local neighborhoods of the integrated embedding. For a cell $i$, cLISI is defined as
\[
\mathrm{cLISI}(i) = \left( \sum_{c} p_{i,c}^2 \right)^{-1},
\]
where $p_{i,c}$ denotes the proportion of neighbors of $i$ belonging to cell type $c$. Lower cLISI values indicate stronger local cell-type purity and better biological conservation.

\subsection{Batch-Effect Removal Metrics}

Batch-effect removal metrics quantify how effectively technical variation across batches or datasets is mitigated while avoiding overcorrection.

\paragraph{Batch average silhouette width (batch ASW).}
Batch ASW is computed analogously to cell-type ASW, but using batch labels instead of cell-type labels. For each cell $i$, the absolute silhouette width $|s(i)|$ is computed and rescaled such that
\[
\mathrm{ASW}_{\text{batch}} = 1 - |s(i)|.
\]
Higher values indicate better batch mixing.

\paragraph{Integration local inverse Simpson’s index (iLISI).}
iLISI evaluates batch diversity within local neighborhoods. For a cell $i$,
\[
\mathrm{iLISI}(i) = \left( \sum_{b} p_{i,b}^2 \right)^{-1},
\]
where $p_{i,b}$ denotes the proportion of neighbors from batch $b$. Higher iLISI values indicate stronger batch mixing.

\paragraph{kBET per label.}
The k-nearest neighbor batch effect test (kBET) evaluates whether batch labels are uniformly distributed within local neighborhoods. In the per-label variant, kBET is applied separately within each cell type. The final score is the average acceptance rate across labels, with higher values indicating reduced batch effects.

\paragraph{Graph connectivity.}
Graph connectivity measures whether cells of the same cell type form a connected component across batches in a k-nearest neighbor graph constructed on the integrated representation. Let $G_c$ denote the subgraph induced by cells of cell type $c$. The connectivity score is defined as the fraction of nodes belonging to the largest connected component of $G_c$, averaged across cell types. Higher values indicate successful integration without fragmenting biological structure.

\paragraph{Principal component regression (PCR).}
PCR quantifies the fraction of variance in the integrated embedding explained by batch labels. Let $Z$ denote the integrated representation and $S$ the batch labels. PCR is computed by regressing principal components of $Z$ on $S$, and the coefficient of determination $R^2$ is averaged across components. Lower PCR values indicate reduced batch-associated variance.

\subsection{Aggregate Scores}

To summarize integration performance, scIB aggregates individual metrics into composite scores. All metrics are first scaled to the unit interval $[0,1]$, with higher values indicating better performance.

The biological conservation score is computed as the arithmetic mean of all biological conservation metrics,
\[
\mathrm{Bio} = \frac{1}{|\mathcal{M}_{\mathrm{bio}}|} \sum_{m \in \mathcal{M}_{\mathrm{bio}}} m,
\]
where $\mathcal{M}_{\mathrm{bio}}$ includes NMI, ARI, cell-type ASW, isolated-label F1, and cLISI.

Similarly, the batch-effect removal score is computed as
\[
\mathrm{Batch} = \frac{1}{|\mathcal{M}_{\mathrm{batch}}|} \sum_{m \in \mathcal{M}_{\mathrm{batch}}} m,
\]
where $\mathcal{M}_{\mathrm{batch}}$ includes batch ASW, iLISI, kBET per label, graph connectivity, and PCR.

Following scIB conventions, the overall integration score is computed as a weighted mean,
\[
\mathrm{Overall} = 0.6 \cdot \mathrm{Bio} + 0.4 \cdot \mathrm{Batch},
\]
placing greater emphasis on biological conservation while still encouraging effective batch-effect removal.

\section{Cumulative Retraining Under Dataset Evolution.} \label{app:wall_clock}

Figure~\ref{fig:figure 3} illustrates the batch mixing performance of \ours for cumulative retraining under protocol arrival. Table~\ref{tab:wall-clock-table} shows wall-clock runtime comparison for this setting.

\begin{figure*}[!t]
\centering
\includegraphics[width=0.7\linewidth]{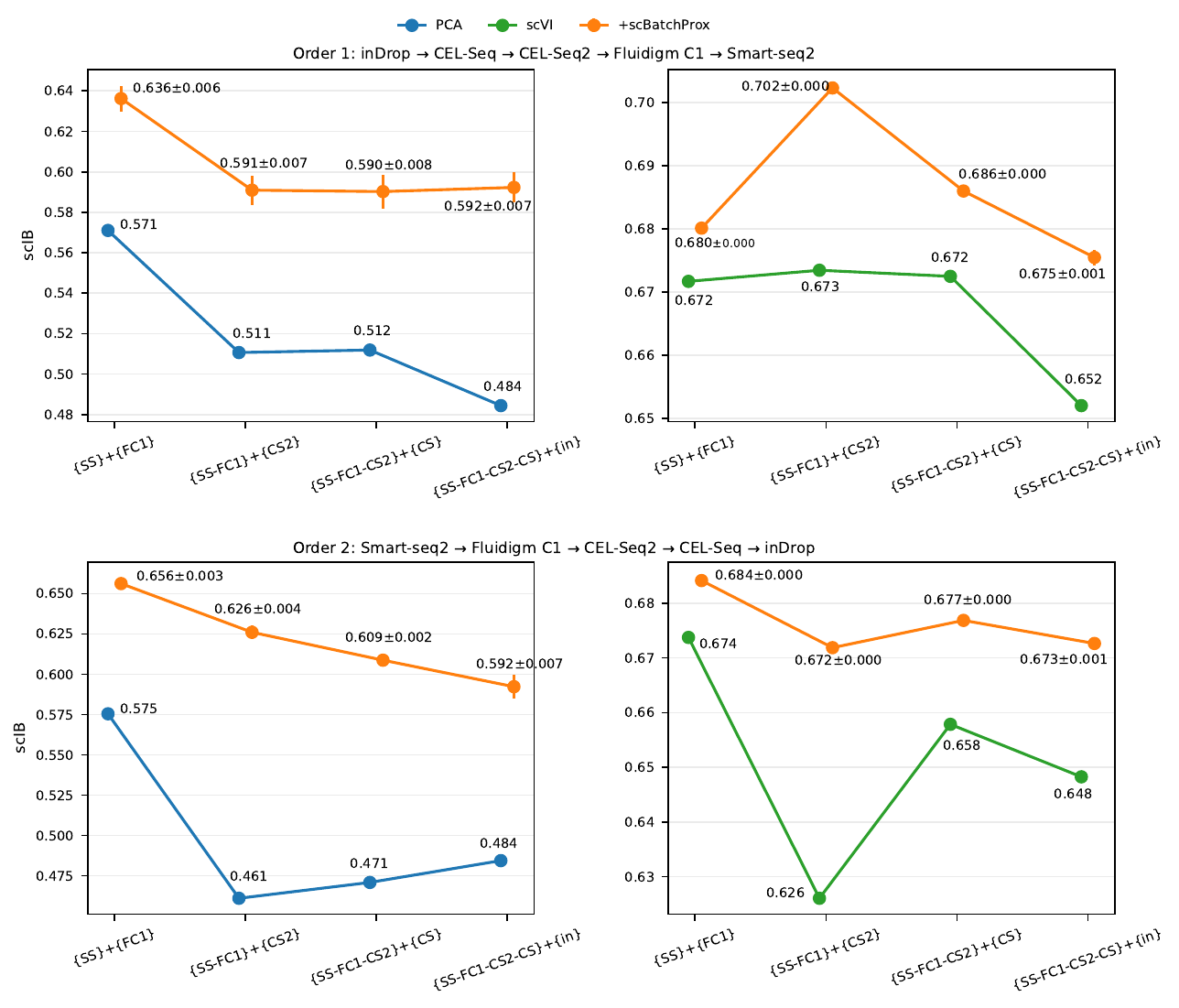}
\caption{
Aggregate scIB scores for cumulative retraining under protocol arrival:
(a) technological progression order
(inDrop $\to$ CEL-Seq $\to$ CEL-Seq2 $\to$ Fluidigm C1 $\to$ Smart-seq2)
and
(b) reverse order
(Smart-seq2 $\to$ Fluidigm C1 $\to$ CEL-Seq2 $\to$ CEL-Seq $\to$ inDrop).
While cumulative retraining is supported by most existing methods, 
it is computationally expensive. \ours consistently improves overall 
embedding quality across both arrival orders.
}
\label{fig:figure 3}
\end{figure*}

\begin{table}[!t]
\centering
\scriptsize
\setlength{\tabcolsep}{2.5pt}
\resizebox{\columnwidth}{!}{
\begin{tabular}{c|l|c|ccc}
\hline
\textbf{Ord.} & \textbf{Method} & \textbf{Step}
& \textbf{Run}
& \textbf{Total}
& \textbf{Cum.} \\
\hline

1 & PCA & 1 & 0.16 & 154.91 & 0.16 \\
1 & PCA & 2 & 0.24 & 236.19 & 0.41 \\
1 & PCA & 3 & 0.45 & 320.40 & 0.86 \\
1 & PCA & 4 & 0.72 & 366.72 & 1.58 \\
\hline

1 & PCA$-$\ours & 1 & 3.35 & 14.93 & 3.35 \\
1 & PCA$-$\ours & 2 & 4.55 & 18.24 & 7.90 \\
1 & PCA$-$\ours & 3 & 3.20 & 18.44 & 11.10 \\
1 & PCA$-$\ours & 4 & 12.20 & 18.04 & 23.30 \\
\hline

1 & scVI & 1 & 154.54 & 154.91 & 154.54 \\
1 & scVI & 2 & 235.65 & 236.19 & 390.20 \\
1 & scVI & 3 & 319.61 & 320.40 & 709.81 \\
1 & scVI & 4 & 365.65 & 366.72 & 1075.45 \\
\hline

1 & scVI$-$\ours & 1 & 11.46 & 14.93 & 11.46 \\
1 & scVI$-$\ours & 2 & 13.58 & 18.24 & 25.04 \\
1 & scVI$-$\ours & 3 & 15.11 & 18.44 & 40.15 \\
1 & scVI$-$\ours & 4 & 5.71 & 18.04 & 45.86 \\
\hline

2 & PCA & 1 & 0.10 & 94.20 & 0.10 \\
2 & PCA & 2 & 0.16 & 155.35 & 0.26 \\
2 & PCA & 3 & 0.26 & 185.90 & 0.52 \\
2 & PCA & 4 & 0.78 & 397.56 & 1.31 \\
\hline

2 & PCA$-$\ours & 1 & 3.75 & 9.08 & 3.75 \\
2 & PCA$-$\ours & 2 & 5.80 & 12.00 & 9.55 \\
2 & PCA$-$\ours & 3 & 5.95 & 10.70 & 15.49 \\
2 & PCA$-$\ours & 4 & 11.14 & 19.56 & 26.63 \\
\hline

2 & scVI & 1 & 93.98 & 94.20 & 93.98 \\
2 & scVI & 2 & 155.00 & 155.35 & 248.98 \\
2 & scVI & 3 & 185.44 & 185.90 & 434.42 \\
2 & scVI & 4 & 396.40 & 397.56 & 830.82 \\
\hline

2 & scVI$-$\ours & 1 & 5.15 & 9.08 & 5.15 \\
2 & scVI$-$\ours & 2 & 6.08 & 12.00 & 11.22 \\
2 & scVI$-$\ours & 3 & 4.63 & 10.70 & 15.86 \\
2 & scVI$-$\ours & 4 & 8.25 & 19.56 & 24.10 \\
\hline

\end{tabular}
}
\caption{Wall-clock runtime comparison under cumulative dataset growth. We report per-step time (Run), total retraining cost (Total), and cumulative runtime (Cum.) across sequential data integration steps for baseline methods (PCA, scVI) and post hoc refinement (\ours). Order 1 corresponds to the technological progression order, and Order 2 corresponds to the reverse order. The reported total time (i.e., full wall-clock time) includes data loading, preprocessing (e.g., data normalization, log1p transformation, etc.), training, and inference.}
\label{tab:wall-clock-table}
\end{table}

\section{Synthetic Heterogeneous Batch Construction in QM8.} \label{app:stat_qm8}

We construct synthetic batches by partitioning molecules based on heavy atom count, thereby grouping molecules by structural complexity. This results in highly imbalanced and heterogeneous batches (Table~\ref{tab:qm8_batches}), reflecting realistic distribution shifts in which certain molecular regimes are underrepresented, with the majority of molecules concentrated in the high-complexity regime. Consequently, batch membership is strongly correlated with molecular structure and feature distribution, creating a challenging setting for post hoc batch correction.

\begin{table}[H]
\centering
\caption{Synthetic batch construction on QM8 based on heavy atom count, resulting in highly imbalanced and heterogeneous groups.}
\small
\begin{tabular}{lcc}
\toprule
\textbf{Batch} & \textbf{\# Molecules} & \textbf{Heavy Atom Range} \\
\midrule
0 & 17    & $\leq 3$ \\
1 & 158   & $4$--$5$ \\
2 & 21{,}611 & $\geq 6$ \\
\bottomrule
\end{tabular}
\label{tab:qm8_batches}
\end{table}

\end{document}